\def\BibTeX{{\rm B\kern-.05em{\sc i\kern-.025em b}\kern-.08em
    T\kern-.1667em\lower.7ex\hbox{E}\kern-.125emX}}
\newcommand{\bd}[1]{\boldsymbol{#1}}
\newtheorem{theorem}{Theorem}
\begin{document}

\title{Dirichlet process mixture models for non-stationary data streams\\
\thanks{This work has been partially supported by the Basque Government through the BERC 2022-2025 program and the Basque Modelling Task Force project; and by the Ministry of Science, Innovation and Universities: BCAM Severo Ochoa accreditation  SEV-2017-0718.
We are grateful to Andrés Masegosa for his helpful commentaries and suggestions.
}
}

\author{\IEEEauthorblockN{1\textsuperscript{st} Ioar Casado}
\IEEEauthorblockA{\textit{Basque Center for Applied Mathematics} \\
Bilbao, Spain \\
icasado@bcamath.org}
\and
\IEEEauthorblockN{2\textsuperscript{nd} Aritz Pérez}
\IEEEauthorblockA{\textit{Basque Center for Applied Mathematics} \\
Bilbao, Spain \\
aperez@bcamath.org}}

\maketitle

\begin{abstract}
In recent years we have seen a handful of work on inference algorithms over non-stationary data streams. Given their flexibility, Bayesian non-parametric models are a good candidate for these scenarios. However,  reliable streaming inference under the concept drift phenomenon is still an open problem for these models. In this work, we propose a variational inference algorithm for Dirichlet process mixture models. Our proposal deals with the concept drift by including an exponential forgetting over the prior global parameters. Our algorithm allows to adapt the learned model to the concept drifts automatically. We perform experiments in both synthetic and real data, showing that the proposed model is competitive with the state-of-the-art algorithms in the density estimation problem, and it outperforms them in the clustering problem. 
\end{abstract}

\begin{IEEEkeywords}
Dirichlet process mixtures, variational inference, streaming data, concept drift, exponential forgetting
\end{IEEEkeywords}

\section{Introduction}
Bayesian non-parametric (BNP) models have become a successful approach for dealing with increasingly complex data, and when it comes to density estimation and clustering, Dirichlet process mixture (DPM) models are the best known BNP models \cite{ferguson1973bayesian}, \cite{teh2010dirichlet}. In contrast with finite mixture models or standard clustering methods, in DPMs the number of mixture components (or clusters) adjusts to the complexity of available data. Apart from avoiding model selection problems, this property makes them specially suited for working with data streams, where data batches arrive sequentially and models need to adapt to the characteristics of the new data. 

Given the ubiquity of non-stationary phenomena in real life data streams, concept drift adaptation has seen great progress in the last decade \cite{lu2018learning}. However, advances in that area have been rarely combined with BNP models, hindering their real life applications \cite{reviewdrift_2020}. Even if there are effective streaming inference algorithms for DPMs, the majority of them implicitly assumes that the data stream is stationary. In order to fill this gap, we propose a new streaming variational inference (VI) algorithm for DPMs that can deal with concept drift.

\subsection*{Contributions}
In this work, we propose a streaming VI algorithm for DPM models, which, for the first time, extends Bayesian parametric forgetting methods \cite{masegosa2017bayesian} to the non-parametric case. This approach combines flexible adaptation to drifts of different magnitudes with the data-driven model complexity of BNPs. 

We perform experiments on both synthetic and real data streams. The experiments evaluate the learned Dirichlet process Gaussian mixtures from the density estimation and clustering points of view. We also analyze our model's ability to track the underlying parameters. The experimental results show that our model outperforms the state-of-the-art algorithms, especially in non-stationary environments.

\section{Related work}\label{rel_work}
The main challenge when working with DPMs and BNP models, in general, is to find efficient learning methods. Markov Chain Monte Carlo (MCMC) methods have been the basic approach for inference in DPMs \cite{neal2000markov}, but they have scalability problems for big datasets \cite{li2021review}, \cite{blei2017variational}. In \cite{attias1999variational} and \cite{jordan1999introduction} the authors introduced VI algorithms, which conceived posterior inference as an optimization problem \cite{bishop2006pattern}, providing faster approximate inference. This framework was adapted for DPMs by \cite{blei2006variational}. Since then, streaming versions of VI have been widely studied. 

Two main paradigms exist to tackle the problem of streaming VI: Streaming variational inference (SVB) \cite{broderick2013streaming} and stochastic variational inference (SVI) \cite{hoffman2013stochastic}. SVB updates the priors for batch $t$ with the posterior obtained from batch $t-1$. By initializing priors with the previous variational distribution, SVB implicitly assumes data interchangeability and is not adequate for non-stationary streams. The same limitation hinders the performance of more recent streaming VI methods such as \cite{NIPS2015_38af8613}.
SVI, on the other hand, extends gradient based optimization to VI. It is not exactly a streaming algorithm, but assumes we can access a fixed data set in an online fashion using minibatches. This requires to know the size of the dataset, $N$, beforehand, which is not feasible in a streaming scenario. However, this problem can be partially circumvented by manually selecting a value for $N$. More recently, sampling-based inference methods have been proposed for DPMs, which can deal with non-stationary data streams \cite{dinari2022sampling}, but there is still room for improvement among VI methods.

In order to obtain an effective VI algorithm for non-stationary DPMs, we propose a version of SVB with a forgetting mechanism. The global prior distribution for batch $t$ is a combination of i) the initial uniformative prior and ii) an exponential forgetting \cite{kulhavy1996duality} of the global variational distribution obtained after observing batch $t-1$. In the proposed procedure, the forgetting parameter controls how much we forget or retain from previous batches. This forgetting parameter can be learned automatically in a Bayesian manner with the hierarchical power priors method \cite{masegosa2017bayesian}. We propose a flexible learning approach with a hierarchical power for each parameter of the mixture model. By doing so, the components of the mixture have its own unique dynamic.

\section{Preliminaries}\label{preliminaries}
\subsection{Dirichlet process mixtures}

The Dirichlet process (DP) is a distribution over probability measures, hence draws from a DP are random distributions. It is often used as a non-parametric prior for infinite mixture models \cite{antoniak1974mixtures, escobar1995bayesian}.
Let $G_0$ be a distribution over the sample space $\Theta$ and let $\alpha$ be a positive scalar. A random distribution $G$ with the same support as $G_0$ is distributed according to a DP with the \textit{concentration parameter} $\alpha$ and the \textit{base distribution} $G_0$, i.e., $G\sim \text{DP}(\alpha, G_0)$, if for any finite measurable partition $\{B_1,\ldots,B_k\}$ of $\Theta$,
\begin{equation}\label{Def.DP}
(G(B_1),\ldots, G(B_k))\sim \text{Dir}(\alpha G_0(B_1),\ldots, \alpha G_0(B_k)).
\end{equation}

DPs were first introduced by Ferguson in \cite{ferguson1973bayesian}, where he proved their existence and basic consistency properties. $G_0$ is known as base distribution because, for any measurable $B\subseteq \Theta$ and any $G\sim\text{DP}(\alpha, G_0)$, we have $\mathbb{E}_{\text{DP}}[G(B)] = G_0(B)$. The concentration parameter $\alpha$ controls the probability mass around the mean of the DP, as $G\rightarrow G_0$ pointwise when $\alpha\rightarrow \infty$.

Samples from $\text{DP}(\alpha, G_0)$ are discrete (i.e. infinite sums of point masses) with probability one. Furthermore, sampling from any $G\sim\text{DP}(\alpha, G_0)$ exhibits a clustering property: let $\theta_1,\ldots,\theta_n$ be samples from $G$. The predictive distribution for $\theta_{n+1}$ marginalizing $G$ is 
\begin{equation}\label{predictiveDP}
        p(\theta_{n+1}|\theta_1,\ldots,\theta_n) = \frac{1}{\alpha+n}\Big(\alpha G_0 + \sum_{i=1}^n \delta_{\theta_i}\Big),
\end{equation}
\noindent where $\delta_{\theta_i}$ is the point mass located at $\theta_i$. Thus, successive draws from $G$ will take repeated values with positive probability in a rich-get-richer fashion. We refer to \cite{teh2010dirichlet} for a complete survey of the basic properties of DPs.

The discreteness and clustering properties of any $G\sim \text{DP}(\alpha, G_0)$ uphold the Dirichlet process as a non-parametric prior for the global parameters of infinite mixture models. The stick-breaking construction of DPs given by  \cite{sethuraman1994constructive} takes this intuition further. For $k\geq 1$, we define

\begin{align}\label{stickb}
    &\beta_k \sim \mathcal{B}(\cdot|1,\alpha),
    &\theta_k \sim G_0\nonumber,\\
    &\pi_k = \beta_k\prod_{t=1}^{k-1}(1-\beta_t),
    &G = \sum_{k=1}^\infty \pi_k\delta_{\theta_k},
\end{align}

\noindent where $\mathcal{B}(\cdot|1,\alpha)$ is a Beta distribution with parameters $1$ and $\alpha>0$. Then $G\sim \text{DP}(\alpha,G_0)$. We can understand this with the stick-breaking metaphor: we break a stick of length $1$ in two parts, $\beta_1$ and $1-\beta_1$. We define $\pi_1$ with $\beta_1$ as in (\ref{stickb}) and continue breaking $1-\beta_1$ to obtain $\beta_2, \beta_3,\ldots$ and $\pi_2, \pi_3,\ldots$. This upholds the interpretation of $G$ as an infinite mixture of point masses with normalized weights $\pi_i$.

Now assume we have data $\bd{x} = \{x_1,\ldots,x_N\}$ drawn from some unknown distribution. We conceive the unknown distribution as a mixture model so that each $x_i$ has distribution $p(\cdot | \theta_i)$, where the mixing distribution over the $\theta_i$ is $G\sim \text{DP}(\alpha, G_0)$. Formally, the resulting mixture model has the following hierarchical form:
    \begin{align*}
        &G\sim \text{DP}(\alpha, G_0)\\
        &\theta_i\sim G\\
        &x_i\sim p(\cdot|\theta_i).
    \end{align*}

\noindent Consequently, this hierarchical model yields an infinite mixture model
$$p(x)=\sum_{k=1}^\infty \pi_k p(x|\theta_k),$$ where $\pi_k$ and $\theta_k$ are the weight (mixing proportion) and the parameters of the $k$-th component of the mixture, and $G$ is the prior over the parameters. Since $G$ has the clustering property mentioned above, different $x_i$'s can pertain to the same component.

If we introduce a new latent variable $z_n$ that indicates the mixture component to which $x_n$ belongs, our model can now be described by the following generative process:
\begin{align}\label{SBConstruction}
        &\text{Draw } \beta_k\sim \mathcal{B}(\cdot|1,\alpha) \text{ for } k=1,2,\ldots\nonumber\\
        &\text{Draw } \theta_k\sim G_0 \text{ for } k=1,2,\ldots\nonumber\\
        &\text{For the }n-\text{th data point:}\\
        &\hspace{10mm}\text{Draw } z_n \sim \text{Mult}(\bd{\pi})\nonumber\\
        &\hspace{10mm}\text{Draw } x_n \sim p(\cdot| \theta_{z_n}),\nonumber
\end{align}

\noindent where $z_n$ takes value $i$ with probability $\pi_i$ and $\bd{\pi}=(\pi_i)_{i=1}^\infty$ is computed using $\bd{\beta}=(\beta_i)_{i=1}^\infty$ as in (\ref{stickb}). 
The joint probability density of the DPM model is then
\begin{equation}\label{Model}
    p(\bd{x}, \bd{\beta}, \bd{z},\bd{\theta}) = \prod_{n=1}^N p(x_n|\theta_{z_n})p(z_n|\bd{\pi})\prod_{k=1}^\infty G_0(\theta_k)\mathcal{B}(\beta_k|1,\alpha).
\end{equation}

The DP prior over mixture parameters leads to an infinite mixture model. However, since $\pi_i$ tends to decrease exponentially as $i$ increases, only a finite number of clusters, $K$, are actually involved when we deal with finite datasets. This solves the problem of determining the number of components of the mixture model, as we let the DPM infer it from the data.

\subsection{Variational inference}\label{VI}

From now on, we assume that all distributions considered are conditionally exponential, and we consider only conjugate priors.

VI has been the fundamental learning procedure for DPMs since the seminal paper \cite{blei2006variational}. Given observed data $\bd{x}=\{x_1,\ldots,x_N\}$ and a model with global variables $\bd{\eta}=\{\eta_1,\ldots,\eta_K\}$ and local variables $\bd{z}=\{z_1,\ldots,z_N\}$, VI conceives the approximation of the intractable posterior $p(\bd{\eta, z}|\bd{x})$ as a continuous optimization problem \cite{bishop2006pattern}.
More precisely, VI indirectly solves
\begin{equation}\label{min_KL}
   \underset{q\in \mathcal{Q}}{\arg \min}\text{ KL}\big[q(\bd{\eta, z})||p(\bd{\eta, z}|\bd{x})\big]
\end{equation}

by solving the equivalent 
\begin{equation}\label{max_L}
    \underset{q\in \mathcal{Q}}{\arg \max}\,\mathcal{ L}(q),
\end{equation}

where $\mathcal{L}(q)$ is called Evidence Lower BOund (ELBO) and takes the form
\begin{equation}\label{def_elbo}
\mathcal{L}(q):=\int_{\bd{\eta, z}} q(\bd{\eta, z}) \log\Big(\frac{p(\bd{x},\bd{\eta, z})}{q(\bd{\eta, z})}\Big)d\bd{\eta}d\bd{z}.
\end{equation}
In this paper, we consider \textit{mean-field} VI, where the \textit{variational distributions} $q(\bd{\eta, z})\in \mathcal{Q}$ are factorizes in terms of marginals as follows:
\begin{equation}\label{mean_field_var}
 q(\bd{\eta},\bd{z}|\bd{\bd{\phi}, \lambda}) = \prod_{n=1}^N q(z_n|\phi_n) \prod_{k=1}^{K} q(\eta_k|\lambda_k),
\end{equation}
\noindent where $\{\phi_1,\ldots,\phi_N,\lambda_1\ldots,\lambda_K\}$ are the \textit{variational parameters}. We refer to \cite{blei2017variational} for a survey of VI methods.

Substituting (\ref{mean_field_var}) in (\ref{def_elbo}), the solution to (\ref{max_L}) takes the form
\begin{equation}\label{eq2}
\begin{aligned}
    &q^*(\eta_k) \propto \exp\big(\mathbb{E}_{q_{\eta_{-k}}}[\log p(\bd{x},\bd{\eta},\bd{\phi})]\big),\\
    &q^*(z_n) \propto \exp\big(\mathbb{E}_{q_{z_{-n}}}[\log p(\bd{x},\bd{\eta},\bd{\phi})]\big)
\end{aligned}
\end{equation}
\noindent where $\mathbb{E}_{q_{\eta_{-k}}}$ stands for the expectation with respect to 
\begin{equation}
    \prod_{n=1}^N q(z_n|\phi_n) \prod_{\begin{subarray}{c} k'=1\\ k'\neq k\end{subarray}}^{K} q(\eta_k'|\lambda_k').
\end{equation}

The solutions in (\ref{eq2}) are updated iteratively using a coordinate ascent algorithm \cite{bishop2006pattern} to obtain the solution to (\ref{min_KL}). From now on, we write the ELBO as $\mathcal{L}(\bd{\lambda, \phi}|\bd{x}, \bd{\lambda}_0)$ to emphasize its dependency on the variational parameters and the data, and $\bd{\lambda}_0$ refers to the natural parameters of $G_0$.

\subsection{Collapsed variational inference for DPMs}
In the model we described in (\ref{SBConstruction}) there is a strong dependency between $\bd{z}$ and $\bd{\beta}$ via $\bd{\pi}$. However, the mean-field VI assumes that all the parameters are independent from each other (\ref{mean_field_var}). Following \cite{teh2006collapsed} and \cite{kurihara2007collapsed}, by marginalizing the mixture weights of the DPM the model presented in (\ref{Model}) becomes

\begin{equation}\label{CollapsedModel}
    p(\bd{x}, \bd{z},\bd{\theta}) = p(\bd{z})\prod_{n=1}^N p(x_n|\theta_{z_n})\prod_{k=1}^\infty p(\theta_k).
\end{equation}

This procedure of marginalizing out the mixture weights inproves the ELBO \cite{teh2006collapsed}, and hence the variational posterior in comparison to standard stick-breaking VI. 
Observe that (\ref{CollapsedModel}) has an infinite number of global parameters, which makes straightforward variational inference unfeasible. Then, following \cite{blei2006variational}, we can truncate the variational distribution to the first $K$ global parameters $\theta_1,\ldots,\theta_K$, and $p(\bd{z})$ takes the form
\begin{equation}
    p(\bd{z}) = \prod_{k<K}\frac{\Gamma(1+N_k)\Gamma(\alpha + N_{>k})}{\Gamma(1+\alpha+N_{\geq k})},
\end{equation}
where $N_k = \sum_{n=1}^N\mathds{1}(z_n=k)$, $N_{>k} = \sum_{n=1}^N\mathds{1}(z_n>k)$ and $N_{\geq k} = N_k + N_{>k}$, where $\mathds{1}$ is the indicator function.

Substituting $$q(\bd{\theta},\bd{z}|\bd{\lambda},\bd{\phi}) = \prod_{n=1}^N q(z_n|\phi_n) \prod_{k=1}^{K} q(\theta_k|\lambda_k)$$ and (\ref{CollapsedModel}) in (\ref{def_elbo}) we find the explicit ELBO for collapsed VI in DPMs:

\begin{equation}\label{ELBO_Collapsed}
\begin{aligned}
        \mathcal{L}(\bd{\lambda},\bd{\phi}|\bd{x}) =& \sum_{n=1}^N\mathbb{E}_q[\log p(x_n|\theta_{z_n})]\\
        +& \sum_{k=1}^K\Big(\mathbb{E}_q[\log p(\theta_k)] - \mathbb{E}_q[\log q(\theta_k|\lambda_k)]\Big)\\
        +& \mathbb{E}_q[\log p(\bd{z})] - \sum_{n=1}^N\mathbb{E}_q[\log q(z_n|\phi_n)].
\end{aligned}
\end{equation}

Using (\ref{eq2}), the update equations for the variational distributions become
\begin{equation}\label{update_global_dpm}
\begin{aligned}
    &q^*(\theta_k|\lambda_k) \propto p(\theta_k)\exp \Big( \sum_{n=1}^N q(z_n = k)\log p(x_n|\theta_k)\Big),
\end{aligned}
\end{equation}

\begin{equation}\label{update_local_dpm}
\begin{aligned}
    q^*(z_n|\phi_k) &\propto \exp\Big( \mathbb{E}_{q_{z_n}}[\log p(z_n| \bd{z}_{-n})]\Big)\\&\times
    \exp\Big( \mathbb{E}_{q_{\theta_{z_n}}}[\log p(x_n| \theta_{z_n})] \Big),
\end{aligned}
\end{equation}
where we write $q_{z_n}$ to denote $q(z_n)$ and so on.

Every term on these update equations can be directly computed except for $\mathbb{E}_{q_{z_n}}[\log p(z_n| \bd{z}_{-n})]$, which involves assignments of $\bd{z}$ growing exponentially with $N$. Following \cite{kurihara2007collapsed} and \cite{huynh2016streaming}, we approximate this term via its second order Taylor expansion. The first order approximation is known to be enough in many contexts \cite{sato2012rethinking, asuncion2009smoothing}, and we use the second order one for extra precision.

The model proposed in this work is an adaptation of this last model. The adaptation consists of a forgetting mechanism that is able to deal with non-stationary data streams.

\section{Streaming VI for non-stationary DPMs}\label{proposal}

A data stream can be represented as a sequence of batches of points $\bd{x}_t \in\mathbb{R}^{d \times N}$ for $t>0$, where $t$ corresponds to the time stamp of the batch, $d$ is the dimensionality of each point and $N$ is the size of every batch. We say that a concept drift occurs when the underlying distribution of data changes. 

Streaming variational Bayes (SVB) is the best known adaptation of VI to the streaming scenario \cite{broderick2013streaming}. At time $t\geq 1$ we receive the data batch $\bd{x}_t$, and we have to solve 
\begin{equation}\label{standard_SVB}
       \underset{\bd{\lambda}_t,\bd{\phi}_t}{\text{arg max }} \mathcal{L}(\bd{\lambda}_t,\bd{\phi}_t|\bd{x}_t, \bd{\lambda}_{t-1}),
\end{equation}
where $\bd{\lambda}_{t-1}$ are the variational global parameters inferred in the previous batch. Thus, the global posterior for batch $t-1$, $q(\cdot|\bd{\lambda}_{t-1})$, is used as a prior for batch $t$. This approach assumes data interchangeability and this assumption is not appropriate for non-stationary data streams. 

\subsection{SVB with power priors}

In this work, following \cite{karny2014approximate} and \cite{ozkan2013marginalized}, we propose as a prior for batch $t$ the combination of the uninformative prior $G_0(\bd{\theta}_t)$ and  $q(\bd{\theta}_{t}|\bd{\lambda}_{t-1})$ using an exponential forgetting mechanism:
\begin{equation}\label{exp_forg_prior}
    \hat{p}(\bd{\theta}_t|\bd{\lambda}_{t-1},\rho_t)\propto q(\bd{\theta}_t|\bd{\lambda}_{t-1})^{\rho_t} G_0(\bd{\theta}_t)^{1-\rho_t},
\end{equation}
where $\rho_t\in [0,1]$ is the forgetting parameter for batch $t$. Hence when $\rho_t=1$, we recover standard SVB in (\ref{standard_SVB}) and when $\rho_t=0$ we simply carry out batchwise VI. Intermediate values of $\rho_t$ emphasize either preserving previous information or reseting the prior.  

By taking $G_0(\bd{\theta}_t)$ from the same exponential family as $q(\bd{\theta}_{t}|\bd{\lambda}_{t-1})$, we have that $\hat{p}(\bd{\theta}_t|\bd{\lambda}_{t-1},\rho_t)$ remains in that family, with natural parameters
\begin{equation}\label{nat_param_exp}
    \rho_t\bd{\lambda}_{t-1}+(1-\rho_t)\bd{\lambda}_0,
\end{equation}
where $\bd{\lambda}_0$ is the natural parameter of the prior $G_0(\bd{\theta})$ \cite{kulhavy1996duality}.

This procedure is similar to the \textit{stabilized forgetting} proposed by \cite{quinn2007learning} for Dirichlet processes. However, they use MCMC methods for the inference and a linear forgetting mechanism, so the base family in each batch changes. As far as we know, our proposal is the first to introduce exponential forgetting in VI for DPMs, and allows SVB-style updates from posteriors to priors.

To wrap up, using the power priors method and choosing proper exponential family distributions, we introduce a forgetting parameter in our inference framework. We will solve the following VI problem in each batch, where only the prior differs from (\ref{standard_SVB}):
\begin{equation}\label{new_var_prob}
    \bd{\lambda}_t, \bd{\phi}_t=\arg \max \limits_{\bd{\lambda}_t,\bd{\phi}_t} \mathcal{L}(\bd{\lambda}_t,\bd{\phi}_t|\bd{x}_t,\rho_t\bd{\lambda}_{t-1}+(1-\rho_t)\bd{\lambda}_0).
\end{equation}

Algorithm \ref{PP-Alg} summarizes power prior inference for Dirichlet process mixtures. In this algorithm we have a single forgetting parameter that is manually chosen.

\begin{algorithm}[h]\label{PP-Alg}
\caption{PP-DPM}
\textbf{Input:} Batch $\bd{x}_t$, $\bd{\lambda}_{t-1}$ and forgetting parameter $\rho_t$\\
\textbf{Output:} $\bd{\lambda}_t, \bd{\phi}_t$

$\bd{\lambda}_t\leftarrow \bd{\lambda}_{t-1}$\\
Initialize $\bd{\phi}_t$\\
\textbf{repeat}\\
\hskip 0.5cm Compute $\hat{p}(\cdot|\bd{\lambda}_{t-1},\rho_t)$, (Eq. \ref{nat_param_exp}).\\
\hskip 0.5cm \textbf{for} $1\leq k\leq K$: \\
\hskip 1cm Compute $q(\theta_{t,k})$, (Eq. \ref{update_global_dpm}) with\\ \hskip 1cm $\hat{p}(\theta_{t,k}|\bd{\lambda}_{t-1},\rho_t)$ instead of $p(\theta_{t,k})$.\\
\hskip 0.5cm \textbf{for} $1\leq n\leq |\{\bd{x}_t\}|$: \\
\hskip 1cm Update $q(z_{t,n})$, (Eq. \ref{update_local_dpm}).\\
\textbf{until} convergence\\
\textbf{return }$\bd{\lambda}_t, \bd{\phi}_t$
\end{algorithm}

\subsection{SVB with hierarchical power priors}

In Algorithm \ref{PP-Alg} the forgetting parameter $\rho_t$ is selected by the user and, in practice, its optimal value can be difficult to find. Moreover, ideally, this parameter should change over the time in order to have a quick response to a concept drift.

In order to overcome these limitations, we automatically learn the value of the forgetting parameter $\rho_t$ with a technique based on \cite{masegosa2017bayesian}. We introduce prior and variational distributions for $\rho_t$ in the variational inference mechanism. This means that our approximation to the optimal $\rho_t$ will automatically change from batch to batch depending on the magnitude of the drift. Thus, this approach allows to detect the drifts by inspecting the values of $\rho_t$. 

We use as a prior a truncated exponential distribution with natural parameter $\gamma$:
\begin{equation}\label{rho_prior}
    p(\rho_t|\gamma) = \frac{\gamma\exp(-\gamma\rho_t)}{1-\exp(-\gamma)}.
\end{equation}

The variational distribution $q(\rho_t|\omega_t)$ will also be a truncated exponential with parameter $\omega_t$, where \begin{equation}\label{expect_rho}
    \mathbb{E}_q[\rho_t] = 1/(1-e^{-\omega_t}) - 1/\omega_t.
\end{equation}
The variational parameter $\omega_t$ has a natural interpretation in terms of forgetting: if $\omega_t<0$, then $\mathbb{E}_q[\rho_t]<0.5$ and the model favours $p_0(\bd{\theta}_t)$ as a better fit, hence forgetting more past data. Conversely, if $\omega_t>0$, $\mathbb{E}_q[\rho_t]>0.5$ and more emphasis is given to past data \cite{masegosa2020variational}.     

Plugging the prior over $\rho_t$ in our collapsed DPM model we obtain an ELBO, $\mathcal{L}_{\text{HPP}}$, in which we cannot work directly, because $\rho_t$ breaks the exponential conjugacy conditions for VI. 
\begin{equation}\label{ELBO_Collapsed_HPP}
\begin{aligned}
        &\mathcal{L}_{\text{HPP}}(\bd{\lambda}_t,\bd{\phi}_t,\omega_t|\bd{x}_t,\bd{\lambda}_{t-1}) = \sum_{n=1}^N\mathds{E}_q[\log p(x_{t,n}|\theta_{z_{t,n}})]\\
        &+ \sum_{k=1}^K\Big(\mathds{E}_q[\log \hat{p}(\theta_{t,k}|\lambda_{t-1,k},\rho_t)] - \mathds{E}_q[\log q(\theta_{t,k}|\lambda_{t,k})]\Big)\\
        &+ \mathds{E}_q[\log p(\bd{z})] - \sum_{n=1}^N\mathds{E}_q[\log q_n(z_{t,n}|\phi_{t,n})]\\
        &+ \mathds{E}_q[\log p(\rho_t|\gamma)] - \mathds{E}_q[\log q(\rho_t|\omega_t)].
\end{aligned}
\end{equation}

In order to solve this problem and following \cite{masegosa2017bayesian}, we propose the following surrogate ELBO:

\begin{equation}\label{surrogate_ELBO_Collapsed_HPP}
\begin{aligned}
        &\hat{\mathcal{L}}_{\text{HPP}}(\bd{\lambda}_t,\bd{\phi}_t,\omega_t|\bd{x}_t,\bd{\lambda}_{t-1})  = \sum_{n=1}^N\mathbb{E}_q[\log p(x_{t,n}|\theta_{z_{t,n}})]\\
        &+ \sum_{k=1}^K\Big(\mathbb{E}_q[\rho_t]\mathbb{E}_q[\log q(\theta_{t,k}|\lambda_{t-1,k})] + (1-\mathbb{E}_q[\rho_t])\mathbb{E}_q[\log G_0(\theta_{t,k})]\Big)\\
        &- \sum_{k=k}^K\mathbb{E}_q[\log q(\theta_{t,k}|\lambda_{t,k})]
        + \mathbb{E}_q[\log p(\bd{z})]\\
        &- \sum_{n=1}^N\mathbb{E}_q[\log q(z_{t,n}|\phi_{t,n})] + \mathbb{E}_q[\log p(\rho_t|\gamma)] - \mathbb{E}_q[\log q(\rho_t|\omega_t)].
\end{aligned}
\end{equation}

Now, we can maximize $\hat{\mathcal{L}}_{\text{HPP}}$ with respect to the variational parameters $\bd{\lambda}_t,\bd{\phi}_t, \omega_t$ using efficient updating rules. According to the following theoretical result, we have that optimization over the surrogate $\hat{\mathcal{L}}_{\text{HPP}}$ provides the same results for $\bd{\lambda}_t,\bd{\phi}_t$ as the standard VI in $\mathcal{L}_{\text{HPP}}$.

\begin{theorem}[Properties of $\hat{\mathcal{L}}_{\text{HPP}}$]\label{thm_1}{\ }
\begin{enumerate}
        \item $\hat{\mathcal{L}}_{\text{HPP}}$ is a lower bound of $\mathcal{L}_{\text{HPP}}$.
        \item  $\hat{\mathcal{L}}_{\text{HPP}}-\mathcal{L}_{\text{HPP}}$ does not depend on $\bd{\lambda}_t$ and $\bd{\phi}_t$, thus maximizing $\hat{\mathcal{L}}_{\text{HPP}}$ w.r.t. them also maximizes $\mathcal{L}_{\text{HPP}}$.
    \end{enumerate}
\end{theorem}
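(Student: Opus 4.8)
The plan is to make the single structural difference between the two objectives explicit and then read off both claims from it. Starting from the definition of the power prior in (\ref{exp_forg_prior}), I would write its normalized log-density as
\[
\log \hat{p}(\theta_{t,k}|\lambda_{t-1,k},\rho_t) = \rho_t\log q(\theta_{t,k}|\lambda_{t-1,k}) + (1-\rho_t)\log G_0(\theta_{t,k}) - \log Z_k(\rho_t),
\]
where $Z_k(\rho_t) = \int q(\theta_{t,k}|\lambda_{t-1,k})^{\rho_t} G_0(\theta_{t,k})^{1-\rho_t}\, d\theta_{t,k}$ is the partition function that destroys conjugacy. Taking $\mathbb{E}_q$ of this expression and invoking the mean-field factorization (the factor $q(\rho_t|\omega_t)$ is independent of the factors $q(\theta_{t,k}|\lambda_{t,k})$) lets me pull $\mathbb{E}_q[\rho_t]$ out of the first two terms, so the $\theta$-prior summand of $\mathcal{L}_{\text{HPP}}$ equals the corresponding summand of $\hat{\mathcal{L}}_{\text{HPP}}$ minus $\mathbb{E}_q[\log Z_k(\rho_t)]$. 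Since every other summand in (\ref{ELBO_Collapsed_HPP}) and (\ref{surrogate_ELBO_Collapsed_HPP}) is identical, I obtain the clean identity
\[
\hat{\mathcal{L}}_{\text{HPP}} - \mathcal{L}_{\text{HPP}} = \sum_{k=1}^K \mathbb{E}_q[\log Z_k(\rho_t)].
\]

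With this identity both parts follow. For claim (2), I would observe that $Z_k(\rho_t)$ depends only on $\rho_t$ and on the fixed quantities $\lambda_{t-1,k}$ and $\lambda_0$; the outer expectation is taken under $q(\rho_t|\omega_t)$ and hence involves only $\omega_t$. Therefore the gap carries no dependence whatsoever on $\bd{\lambda}_t$ or $\bd{\phi}_t$, which immediately gives $\nabla_{\bd{\lambda}_t}\hat{\mathcal{L}}_{\text{HPP}} = \nabla_{\bd{\lambda}_t}\mathcal{L}_{\text{HPP}}$ and likewise for $\bd{\phi}_t$, so the two objectives share the same stationary points and maximizers in those variables and the coordinate-ascent updates for $\bd{\lambda}_t,\bd{\phi}_t$ are unchanged.

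For claim (1), it remains only to sign the gap, i.e.\ to show $Z_k(\rho_t)\le 1$ for every $\rho_t\in[0,1]$; this is where the real inequality lives and I expect it to be the only genuine obstacle. I would apply H\"older's inequality to the integrand $q^{\rho_t}G_0^{1-\rho_t}$ with the conjugate exponents $1/\rho_t$ and $1/(1-\rho_t)$, obtaining
\[
Z_k(\rho_t) \le \Big(\int q(\theta_{t,k}|\lambda_{t-1,k})\,d\theta_{t,k}\Big)^{\rho_t}\Big(\int G_0(\theta_{t,k})\,d\theta_{t,k}\Big)^{1-\rho_t} = 1,
\]
since both densities integrate to one (this $Z_k$ is exactly a Chernoff/Bhattacharyya-type coefficient, which never exceeds one). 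The degenerate endpoints $\rho_t\in\{0,1\}$ give $Z_k=1$ directly, so $\log Z_k(\rho_t)\le 0$ throughout the support of $q(\rho_t|\omega_t)$; hence each $\mathbb{E}_q[\log Z_k(\rho_t)]\le 0$ and the summed gap is non-positive, establishing $\hat{\mathcal{L}}_{\text{HPP}}\le \mathcal{L}_{\text{HPP}}$. The only remaining effort is the routine bookkeeping needed to confirm that the non-prior terms truly match term-by-term between (\ref{ELBO_Collapsed_HPP}) and (\ref{surrogate_ELBO_Collapsed_HPP}), which is what justifies isolating the discrepancy in the partition functions above.
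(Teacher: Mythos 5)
Your proof is correct, and it takes a somewhat different route from the paper's. The paper disposes of the theorem by citing \cite{masegosa2017bayesian} (Theorem 1), whose argument performs the same gap isolation you do — the difference $\hat{\mathcal{L}}_{\text{HPP}}-\mathcal{L}_{\text{HPP}}$ is exactly $\sum_k \mathbb{E}_q[\log Z_k(\rho_t)]$ — but signs the gap using the conjugate-exponential structure: writing $\hat{p}$ in exponential-family form with natural parameter $\rho_t\lambda_{t-1,k}+(1-\rho_t)\lambda_0$ as in (\ref{nat_param_exp}), one gets $\log Z_k(\rho_t)=A\big(\rho_t\lambda_{t-1,k}+(1-\rho_t)\lambda_0\big)-\rho_t A(\lambda_{t-1,k})-(1-\rho_t)A(\lambda_0)$, which is non-positive precisely because the log-normalizer $A$ is convex (Jensen's inequality). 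Your H\"older argument reaches the same conclusion $Z_k(\rho_t)\le 1$ without ever invoking the exponential family: it only needs $q(\cdot|\lambda_{t-1,k})$ and $G_0$ to be normalized densities, since $Z_k$ is a Chernoff-type coefficient. That makes your proof of claim (1) more elementary and strictly more general (the two routes are of course cousins — convexity of $A$ is itself usually proved via H\"older). What the paper's formulation buys in exchange is an explicit closed-form expression for the gap as a Jensen gap of $A$, which meshes directly with the natural-parameter bookkeeping used throughout Section \ref{proposal} and underlies the tractable update (\ref{optimal_omega}) for $\omega_t$. Your treatment of claim (2) — the gap depends only on $\omega_t$, $\bd{\lambda}_{t-1}$ and $\bd{\lambda}_0$, hence gradients in $\bd{\lambda}_t,\bd{\phi}_t$ coincide — is identical in substance to the cited argument and is complete as written.
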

\begin{proof}
The proof relies on Jensen's inequality and the convexity of the log-normalizer of $\hat{p}(\bd{\theta}_t|\bd{\lambda}_{t-1},\rho_t)$. See \cite{masegosa2017bayesian}, Theorem 1.
\end{proof}

This result indicates that the update rules for $\bd{\lambda}_t,\bd{\phi}_t$ are the same as in (\ref{update_global_dpm}) and (\ref{update_local_dpm}). In order to update $\omega_t$, we use the natural gradient \cite{amari1998natural} of  $\hat{\mathcal{L}}_{\text{HPP}}$ with respect to $\omega_t$. This results in the update rule
\begin{equation}\label{optimal_omega}
    \omega_t^* = \text{KL}\big(q(\bd{\theta}_t|\bd{\lambda}_t)||G_0(\bd{\theta}_t)\big) - \text{KL}\big(q(\bd{\theta}_t|\bd{\lambda}_t)||q(\bd{\theta}_t|\bd{\lambda}_{t-1})\big) + \gamma
\end{equation}
The following pseudo-code summarizes the hierarchical power prior method for DPMs:
\begin{algorithm}[h]\label{HPP-Alg}
\caption{HPP-DPM}
\textbf{Input:} Data batch $\bd{x}_t$ and variational posterior $\bd{\lambda}_{t-1}$.\\
\textbf{Output:} $\bd{\lambda}_t, \bd{\phi}_t, \omega_t$.

$\bd{\lambda}_t\leftarrow \bd{\lambda}_{t-1}$\\
$\mathbb{E}_q[\rho_{t}] = 1/2$\\
Initialize $\bd{\phi}_t$.\\
\textbf{repeat}\\
\hskip 0.5cm Compute $\hat{p}(\cdot|\bd{\lambda}_{t-1}, \mathbb{E}_q[\rho_{t}])$ (Eq. \ref{exp_forg_prior}).\\
\hskip 0.5cm \textbf{for} $1\leq k\leq K$: \\
\hskip 1cm Compute $q(\theta_{t,k})$ (Eq. \ref{update_global_dpm}) with\\ \hskip 1cm $\hat{p}(\theta_{t,k}|\bd{\lambda}_{t-1}, \mathbb{E}_q[\rho_{t}])$ instead of $p(\theta_{t,k})$.\\
\hskip 0.5cm \textbf{for} $1\leq n\leq |\{\bd{x}_t\}|$: \\
\hskip 1cm Update $q(z_{t,n})$ (Eq. \ref{update_local_dpm}).\\
\hskip 0.5cm Compute $w_{t}$ (Eq. \ref{optimal_omega}).\\
\hskip 0.5cm Update $\mathbb{E}_q[\rho_{t}]$ (Eq. \ref{expect_rho}).\\
\textbf{until} convergence.\\
\textbf{return }$\bd{\lambda}_t, \bd{\phi}_t, \omega_t.$
\end{algorithm}

\subsection{Multiple hierarchical power priors for DPMs}
Algorithm \ref{HPP-Alg} considers a single forgetting parameter $\rho_t$. This approach can be extended by considering one independent power prior $\rho_{t,k}$ for each global parameter $\theta_{t,k}$ of the mixture model. This can be easily done by assuming that the $\rho_{t,k}$ are pairwise independent. With this assumption, we implicitly consider that the components of a non-stationary infinite mixture have different dynamics. The update rule for each $\omega_{t,k}$ associated to the parameter of the mixture $\theta_{t,k}$ is:

\begin{equation}\label{m_optimal_omega}
\begin{aligned}
    \omega_{t,k}^* =& \text{KL}\big(q(\theta_{t,k}|\lambda_{t,k})||G_0(\theta_{t,k})\big)\\
    &- \text{KL}\big(q(\theta_{t,k}|\lambda_{t,k})||q(\theta_{t,k}||\lambda_{t-1,k})\big) + \gamma
\end{aligned}
\end{equation}

This extension allows the model to have different forgetting mechanisms for each global parameter, and will be crucial for our DPM model, since the concept drift does not necessarily affect every mixture component equally. We refer to this model as \textit{multiple hierarchical power priors} (MHPP). The inference mechanism of MHPP is shown in Algorithm \ref{MHPP-Alg}.

\begin{algorithm}[h]\label{MHPP-Alg}
\caption{MHPP-DPM}
\textbf{Input:} Data batch $\bd{x}_t$ and variational posterior $\bd{\lambda}_{t-1}$\\
\textbf{Output:} $\bd{\lambda}_t, \bd{\phi}_t, \bd{\omega}_t$

$\bd{\lambda}_t\leftarrow \bd{\lambda}_{t-1}$\\
$\mathbb{E}_q[\rho_{t,k}] = 1/2$ for $1\leq k \leq K$\\
Initialize $\bd{\phi}_t$\\
\textbf{repeat}\\
\hskip 0.5cm \textbf{for} $1\leq k\leq K$: \\
\hskip 1cm Compute $\hat{p}(\cdot|\lambda_{t-1,k},\mathbb{E}_q[\rho_{t,k}])$, (Eq. \ref{exp_forg_prior}).\\
\hskip 1cm Compute $q(\theta_{t,k})$, (Eq. \ref{update_global_dpm}) with\\ \hskip 1cm $\hat{p}(\theta_{t,k}|\lambda_{t-1,k},\mathbb{E}_q[\rho_{t,k}])$ instead of $p(\theta_{t,k})$.\\
\hskip 1cm Compute $w_{t,k}$, (Eq. \ref{m_optimal_omega}).\\
\hskip 1cm Update $\mathbb{E}_q[\rho_{t,k}]$, (Eq. \ref{expect_rho}).\\
\hskip 0.5cm \textbf{for} $1\leq n\leq |\{\bd{x}_t\}|$: \\
\hskip 1cm Update $q(z_{t,n})$, (Eq. \ref{update_local_dpm}).\\
\textbf{until} convergence\\
\textbf{return }$\bd{\lambda}_t, \bd{\phi}_t, \bd{\omega}_t$
\end{algorithm}

\section{Experiments}\label{experim}

In this section we empirically evaluate the three proposed models: PP (Algorithm \ref{PP-Alg}), where the single forgetting parameter has to been hand-tuned beforehand; HPP (Algorithm \ref{HPP-Alg}), which automatically learns the forgetting parameter; and MHPP-DPM (Algorithm \ref{MHPP-Alg}), which learns a forgetting parameter for each global parameter of the mixture model. For the PP methods, in each experiment we choose the best forgetting parameter in $\rho\in \{0.9, 0.99\}$.

We compare their performance against the following baselines:
\begin{itemize}
    \item Streaming variational bayes DPM (SVB) of \cite{broderick2013streaming}.
    \item Stochastic variational inference (SVI) of \cite{hoffman2013stochastic} as implemented by \cite{hughes2014bnpy}.
    \item Privileged-DPM (Privileged), a version of SVB-DPM which discards previous information when a drift happens.
\end{itemize}
SVB and SVI are the state-of-the-art procedures for DPM over data streams. Privileged represents the gold standard, however, it can not be used in practice because it requires to  know when each concept drift occurs. A Python implementation of SVB, SVI, Privleged, PP, HPP and MHPP will be available online in an open repository after publication. For now, it can be found in the anonymous repository \url{https://anonymous.4open.science/r/DPM-for-non-stationary-streams-0E44/README.md}

In HPP and MHPP, the inference step for $\rho_t$ is simultaneous to that of $\bd{\theta}$ and $\bd{z}$, hence the complexity will be the same for every model based on VI.

We evaluate the ability to adapt to these drifts in the following tasks: 
\begin{itemize}
    \item Density estimation: we measure the quality of the learned DPMs using the log-likelihood in test data.
    \item Clustering: we evaluate the obtained clustering using four popular clustering metrics: Silhouette score, \cite{ROUSSEEUW198753}, Normalized Mutual Information (NMI), Adjusted Rand Index (ARI) and Purity. We have reported the mean values across all the batches of the data stream. All the measures have been computed using the implementation of the library \cite{scikit-learn}.
    \item Model parameter tracking: using the synthetic data, we compare the parameters of the mixture model obtained by the different algorithms with respect to the parameters of the true model.
\end{itemize}

\begin{figure*}[t]
\subfloat[Drift every 4 batches]{
\includegraphics[width=\columnwidth]{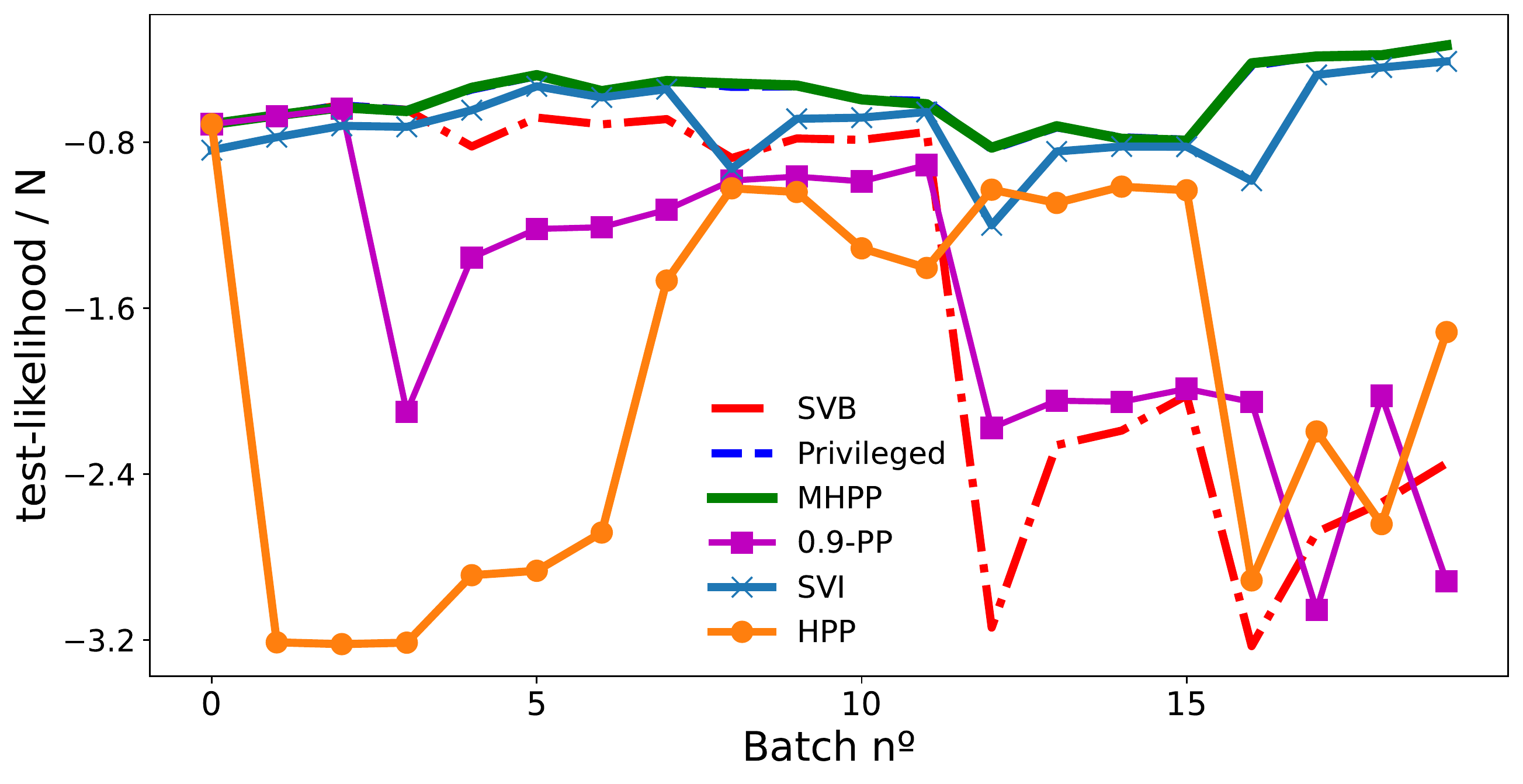}\label{fig:Synth_lik_2}}
\subfloat[Drift every batch]{
\includegraphics[width=\columnwidth]{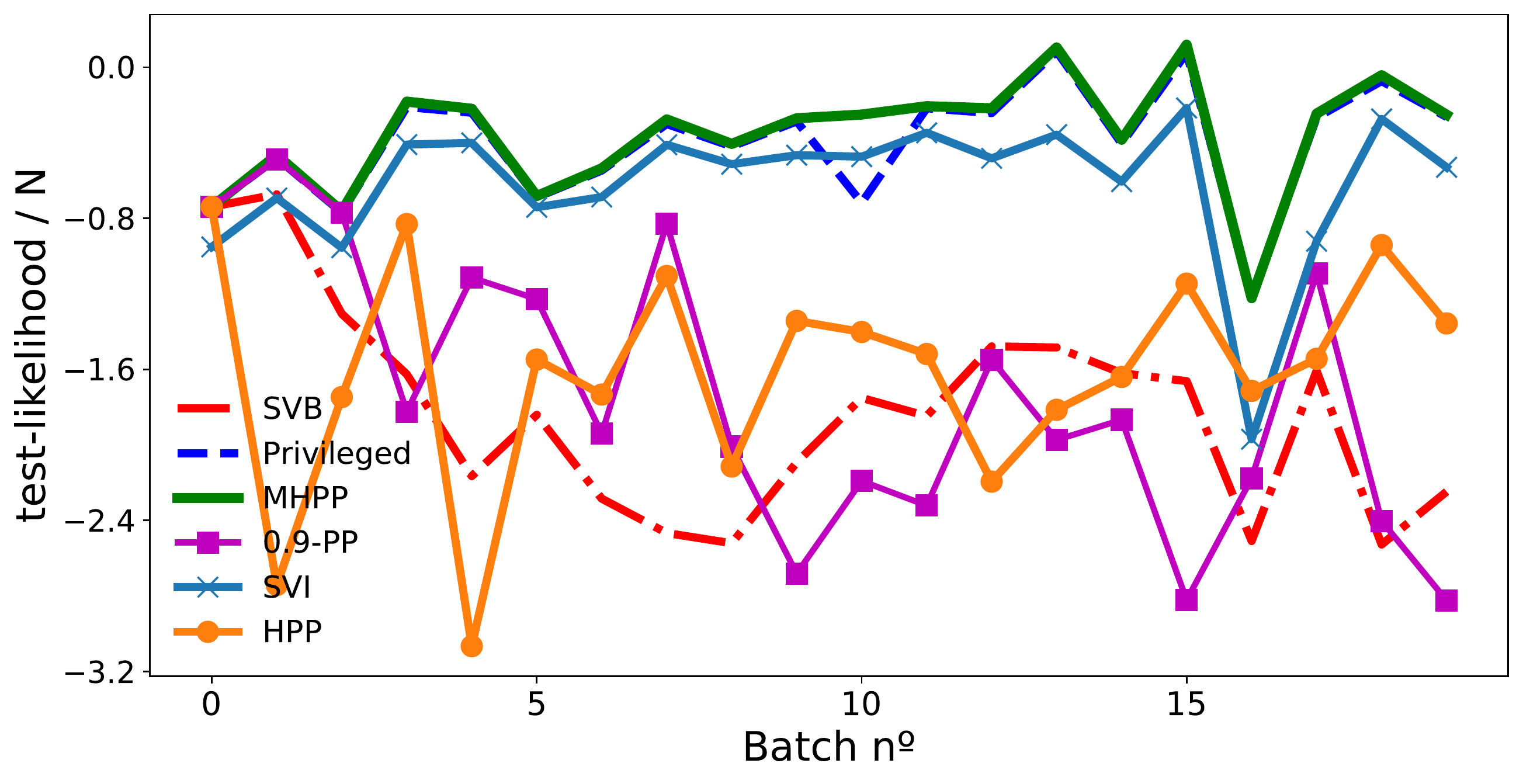}\label{fig:Synth_lik}}
\caption{log-likelihood per test data point from 2D Gaussians. We consider different drift frequencies.}
\end{figure*}

\begin{table*}[h]
\centering
\begin{tabular}{*{7}{||c| c| c c c c ||c}}
 \hline
  & \textit{Privileged} & SVB & SVI & $0.9$-PP & HPP & \textbf{MHPP} \\
 \hline\hline
  \multicolumn{7}{|c|}{Drift every 4 batches}\\ \hline
 Silhouette score & $0.83\pm 0.05$ & $\bd{0.81}\pm 0.07$ & $0.68\pm 0.12$ & $0.55\pm 0.29$ & $0.71\pm 0.15$ &  $0.80\pm 0.10$ \\
 \hline
 NMI score & $1$ & $\bd{0.99}\pm 0.01$ & $0.83\pm 0.19$ & $0.87\pm 0.16$ &$0.96\pm 0.05$ & $\bd{0.99}\pm 0.01$ \\ 
 \hline
 ARI score & $1$ & $\bd{0.99}\pm 0.01$ & $0.95\pm 0.07$ & $0.75\pm 0.27$ & $0.81\pm 0.24$  & $\bd{0.99}\pm 0.02$ \\
 \hline
 Purity score & $1$ & $\bd{1}$ & $0.98\pm 0.04$ & $0.84\pm 0.19$ & $0.83\pm 0.21$  & $\bd{1}$ \\
 \hline\hline
  \multicolumn{7}{|c|}{Drift every batch}\\ \hline
 Silhouette score & $0.84\pm 0.03$ & $0.74\pm 0.10$ & $0.57\pm 0.23$ & $0.46\pm 0.28$ & $0.78\pm 0.08$ & $\bd{0.82}\pm 0.06$ \\
 \hline
  NMI score & $0.99\pm 0.04$ & $0.94\pm 0.07$ & $0.95\pm 0.06$ & $0.79\pm 0.03$  & $0.91\pm 0.09 $ & $\bd{0.99}\pm 0.03$ \\ 
 \hline
ARI score & $0.97\pm 0.09$ & $0.92\pm 0.11$ & $0.94\pm 0.09 $ & $0.67\pm 0.27$ & $0.84\pm 0.15$  & $\bd{0.98}\pm 0.06$ \\
 \hline
 Purity score & $0.97\pm 0.07$ & $0.96\pm 0.09$ &  $0.86\pm 0.6$  & $0.77\pm 0.19$ & $0.86\pm 0.15$ & $\bd{0.99}\pm 0.05$ \\
 \hline
\end{tabular}
\caption{Results for different cluster metrics in 2D Gaussians.\\ We do not consider \textit{Privileged} when highlighting the best algorithm}
\label{table:table_syn}
\end{table*}

In these experiments every algorithm implements collapsed Dirichlet process (isotropic) Gaussian mixtures with truncation $T$, hence in our case the global parameters are the means and covariances of each component: 
\begin{equation}\label{dpgm_comp}
    \bd{\theta} = \{\bd{\mu}_1,\tau_1,\ldots \bd{\mu}_T, \tau_T\}.
\end{equation} We use uninformative conjugate priors $\mathcal{N}(\bd{\mu}; \bd{0}, I)$, $\text{Gamma}(\tau; 1,1)$. 
\subsection{Synthetic data}

In this experiment we generate four 2D isotropic Gaussians, and we randomly vary their mean and covariance for 20 batches in order to simulate drift.
For this experiment we set $\alpha = 2$, truncation parameter $T = 10$ and run all the algorithms for $100$ iterations. We use 1000 training points and 500 test points per batch. In the case of SVI, we work as if each batch was the full dataset and warm-start the model for the next batch. This can bias the results towards SVI. We fix its learning parameters $rhoexp = 0.55$ and $rhodelay = 1$. The experiments have been repeated 10 times.

\begin{figure*}[t]
\subfloat{\includegraphics[width=\columnwidth]{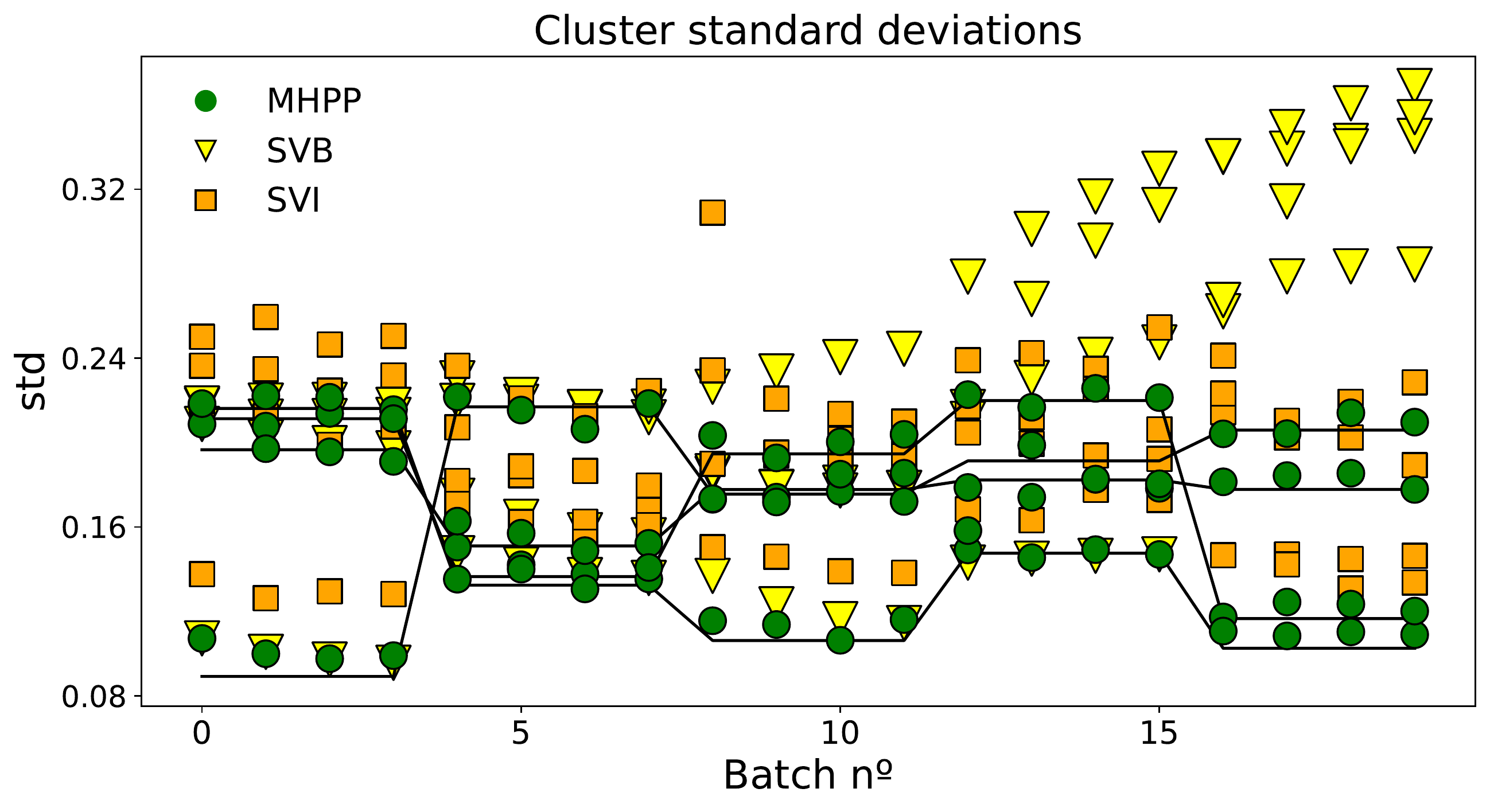}\label{fig:param_track_1}}
\subfloat{\includegraphics[width=\columnwidth]{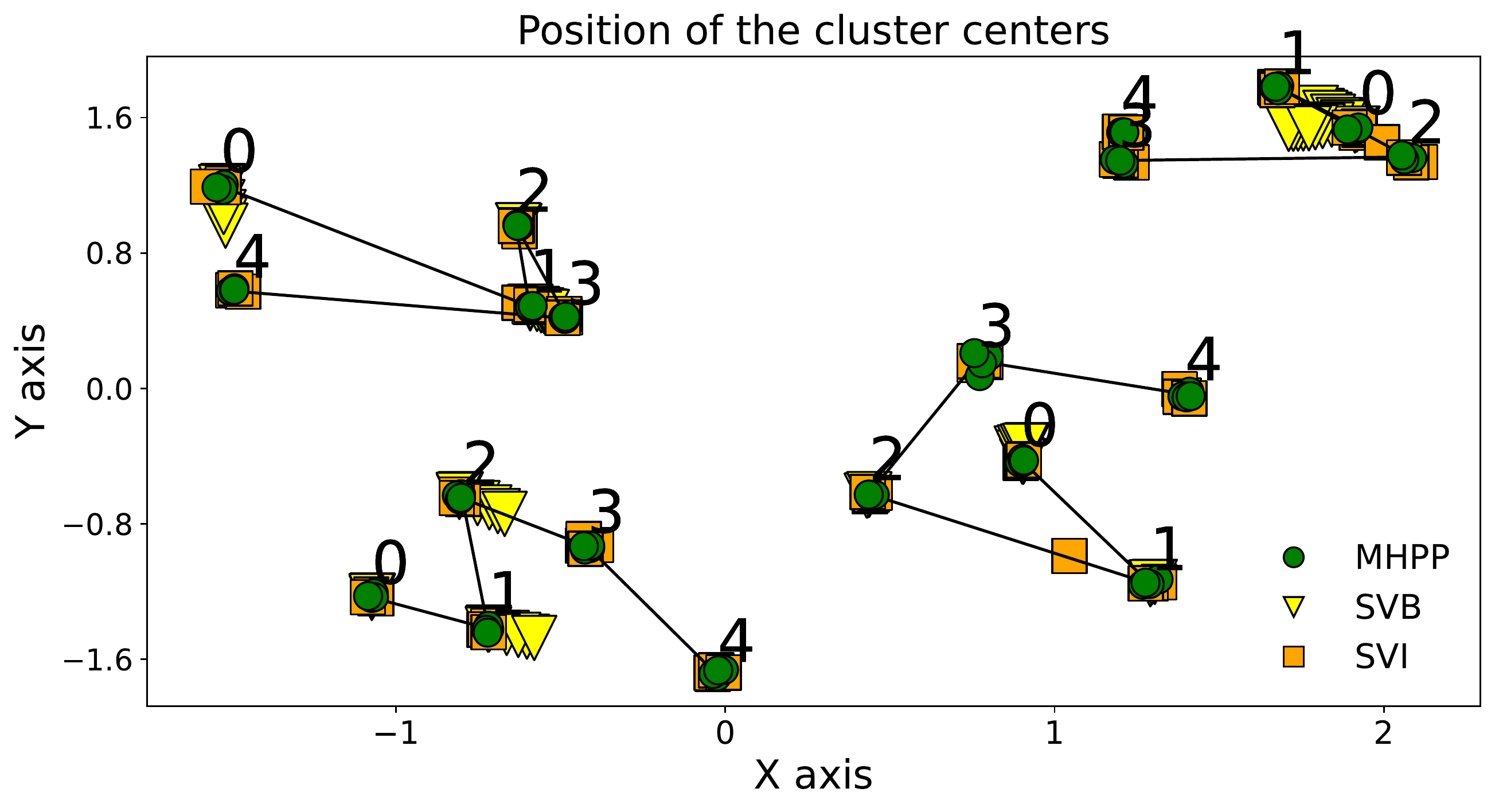}\label{fig:param_track_2}}
\caption{Global parameter tracking for different algorithms. We represent data of the 4 most populated components. Ground truth is indicated by black lines.}
\end{figure*}
\subsubsection{Density estimation}

Figures \ref{fig:Synth_lik_2} and \ref{fig:Synth_lik} compare the algorithms according to the log-likelihood (higher is better) of test data using the obtained Gaussian mixture. In \ref{fig:Synth_lik_2}, where concept drift occurs every $4$ batches, the performance of MHPP is remarkably similar to that of Privileged, both in response to drifts and in the stationary phase. MHPP outperforms the state-of-the-art procedures. HPP and 0.9-PP obtain worse results, which shows that the use of a single parameter is insufficient to deal with the synthetic data.

Figure \ref{fig:Synth_lik} shows the same experimental framework with drift in every batch. Again, MHPP obtains results remarkably similar to Privileged. This shows that MHPP is able to address density learning in scenarios with very frequent drifts. Conversely, PP, SVB and SVI methods perform worse with frequent drifts. Again, HPP and 0.9-PP show high numerical instability, and its performance justifies the need for multiple forgetting parameters as in MHPP.

\subsubsection{Clustering}

Table \ref{table:table_syn} summarizes the results of different cluster metrics under the two drift frequencies studied. When the drift occurs every 4 batches MHPP and SVB obtain the best results. In the scenario where the drift occurs at every batch, MHPP obtains the best results, and they are remarkably better than the state-of-the-art. This suggests that MHPP is the best algorithm addressing the concept drift in clustering problems.

\subsubsection{Parameter tracking}

To analyze the ability of the algorithms to learn the parameters of the true Gaussian mixture model, we show the evolution of the estimated means and standard deviations of the four most populated clusters in each batch. In order simplify the visualization of the results, we have selected MHPP, SVB and SVI, and we have considered the scenario with drifts every 4 batches. Figures \ref{fig:param_track_1} and \ref{fig:param_track_2} show the evolution of the standard deviations and means respectively. In Figure \ref{fig:param_track_2} the numbers represent the order of drifts.

The results clearly show that MHPP provides the best estimation of the parameters. MHPP is able to recover the parameters of the underlying Gaussian mixture models. The proposed procedure adapts to concept drift immediately, while the time of response of SVI and SVB is higher and less accurate.

The experiments also show that the forgetting parameters of MHPP tend to $0.5$ when their component is not \textit{active} in the DPM, while capturing different dynamics for means and covariances. %Anyway, the interpretability of the $\rho_{t,k}$ becomes difficult due to the label-switching phenomena in DPMs.

Overall, the experimentation in synthetic data upholds MHPP as the most competitive method for DPM density estimation and clustering in non-stationary streaming scenarios.

\subsection{Real data}
\begin{table*}[h]
\centering
\begin{tabular}{*{15}{||c| c| c c c c}}
 \hline
   & \textit{Privileged} & SVB & SVI & $0.99$-PP & HPP & \textbf{MHPP} \\
 \hline\hline
%  \multicolumn{6}{|c|}{n-MNIST data}\\ \hline
 Silhouette score & $0.06\pm 0.03$  & $0.02\pm 0.03$ & $\bd{0.17}\pm 0.04$  & $0.01\pm 0.04$  & $0.03\pm 0.03$ & $0.05\pm 0.02$ \\
 \hline
 NMI score & $0.67\pm 0.03$ & $0.64\pm 0.04$ & $0.24\pm 0.08$ & $0.67\pm 0.02$   &$0.67\pm 0.04$ & $\bd{0.69}\pm 0.04$ \\ 
 \hline
 ARI score & $0.45\pm 0.05$ & $0.43\pm 0.05$ & $0.08\pm 0.04$ & $0.45\pm 0.04$ & $0.48\pm 0.10$ & $\bd{0.50}\pm 0.07$ \\
 \hline
 Purity score & $0.78\pm 0.04$ & $0.75\pm 0.06$  & $0.25\pm 0.03$ & $0.76 \pm 0.05$  &$0.70\pm 0.06$ & $\bd{0.79}\pm 0.06$ \\
 \hline
\end{tabular}
\caption{Results for different cluster metrics in n-MNIST data set.\\ We do not consider \textit{Privileged} when highlighting the best algorithm}
\label{table:table_real}
\end{table*}

\begin{figure}[h]
    \centering
    \includegraphics[width=\columnwidth]{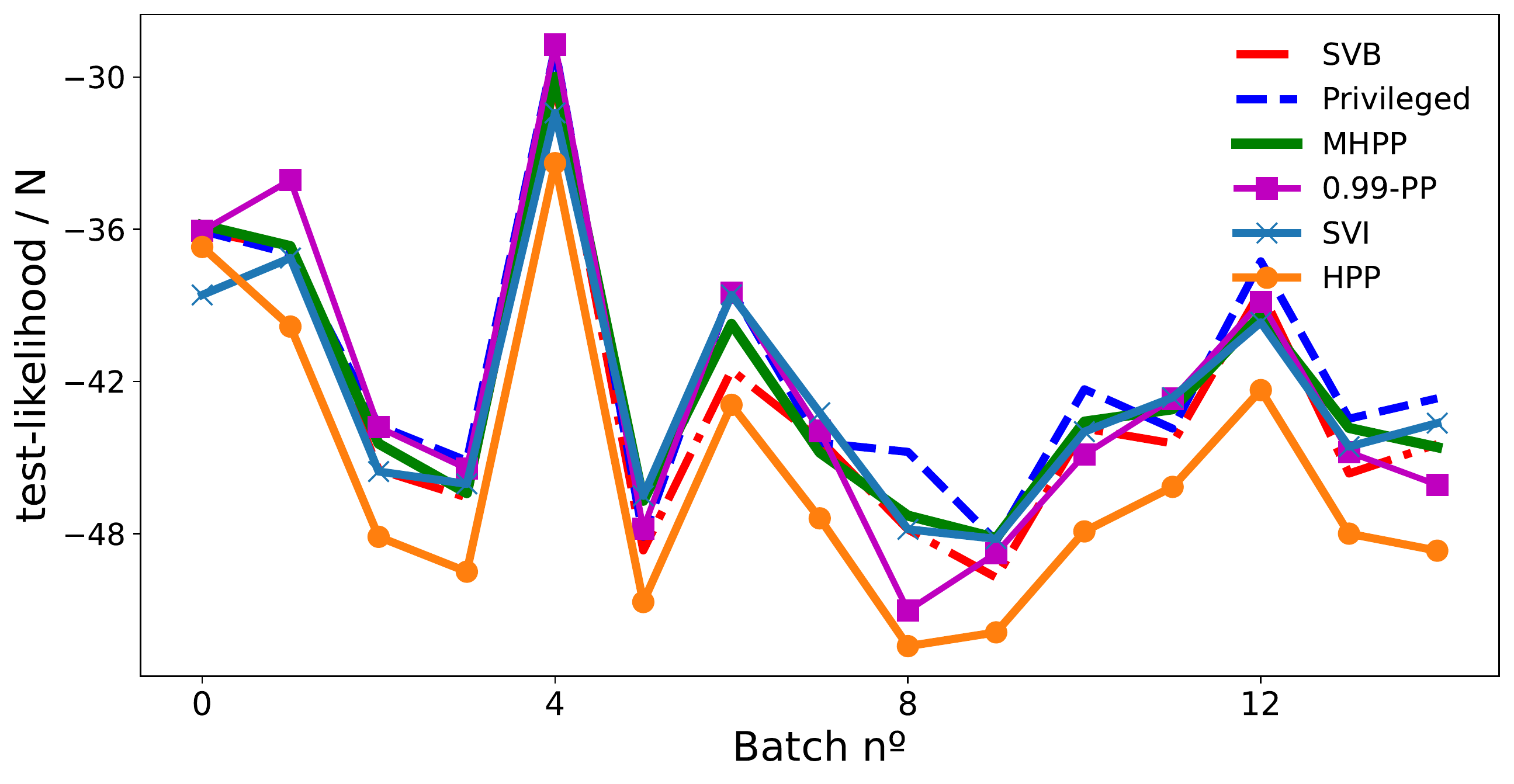}    \caption{log-likelihood per data point of different algorithms for held-out data from n-MNIST.}
    \label{fig:MNIST}
\end{figure}

In order to test our algorithm with real data, we use the MNIST \cite{lecun1998gradient} and the noisy MNIST (n-MNIST) \cite{basu2017learning} datasets. The first is the standard digit recognition dataset, while the second includes three datasets, each of them created by adding a different kind of noise to the digits: additive white Gaussian noise, motion blur, and a combination of additive white Gaussian noise and reduced contrast. The transition from MNIST to n-MNIST data and the addition or removal of type of digits will simulate the concept drifts. The experimental framework is as follows: we consider all four data sources and for each batch we first randomly select the number of digits we consider in a range from $6$ to all $9$. Then we sample those from one of the data sources randomly. This will create a data stream where the number of cluster varies and the source of those clusters can change from batch to batch.

\begin{figure}[h]
    \centering
    \includegraphics[width=8cm]{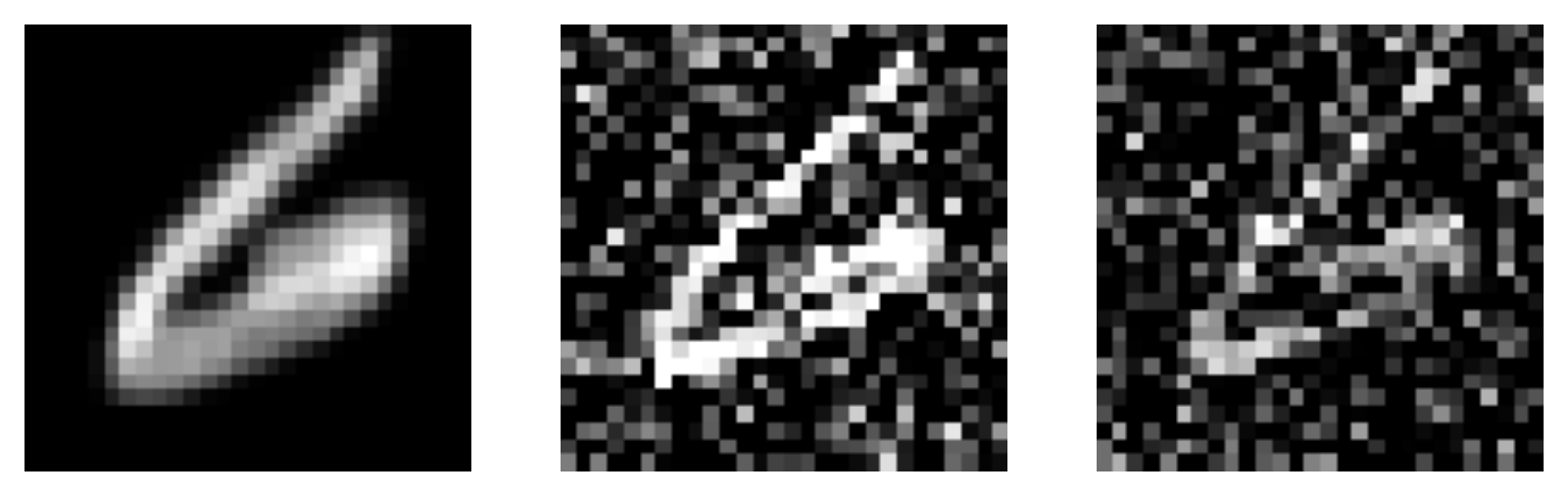}    \caption{Three n-MNIST versions of the same MNIST digit. Motion blur (left), additive gaussian noise (center) and additive gaussian noise + reduced contrast (right).}
    \label{fig:MNIST}
\end{figure}

Every dataset is preprocessed with minmax scaling and the 764 ($28\times 28$) dimensions are reduced to $50$ using PCA.
For this experiment we set truncation parameter $T=30$, $\alpha=3$, $1000$ training points and $500$ test points. 

\subsubsection{Density estimation}
Figure \ref{fig:MNIST} shows test log-likelihood for different algorithms in the n-MNIST experiment. The performances of SVI, SVB, PP and MHPP are very similar. HPP provides the worst results. Overall, all the likelihoods remain comparable. However, we have observed that SVI and SVB requires more components in the mixture model to reach the results of MHPP.

\subsubsection{Clustering}

Table \ref{table:table_real} shows the results of each model in the four clustering metrics considered. MHPP is superior in 3 out of 4 metrics, closely followed by $0.99$-PP. The different performance with respect to density estimation can be explained by the fact that the log-likelihood does not penalize the use of too many mixture components.

\section{Conclusions and future work}

In this work we propose an adaptation of Dirichlet process mixtures to streaming data with concept drifts. The proposed adaptation includes a forgetting mechanism that allows to accommodate the obtained mixture model to concept drift. We introduce three streaming variational inference methods for Dirichlet process mixtures, each of them with different forgetting mechanisms: PP uses a single forgetting parameter that needs to be hand-tuned, while HPP can automatically learn it. Finally, MHPP includes a forgetting parameter for each global parameter. MHPP is the most flexible approach because every component of the mixture model can be adapted independently to the particularities of the concept drift.

We have performed a set of experiments using synthetic and real data. The experimental results show that the forgetting parameters can be learned from data automatically. Besides, the experiments show that MHPP consistently outperforms HPP and PP, which indicates that it is advisable to have a forgetting parameter for each of the parameters of the learned model. MHPP obtains competitive results with respect to the state-of-the-art in terms of density estimation and outperforms them in the clustering problem, especially in scenarios for which the frequency of the drift is high. MHPP reacts faster to the drifts and obtains a better estimate of the parameters than the state-of-the-art. 

We plan to study the extension of MHPP to the truncation-free Dirichlet process mixtures, since proposals such as \cite{huynh2017streaming}, \cite{huynh2016streaming} could be effectively combined with our method. In addition, we will explore in more detail the interpretation of the evolution of the forgetting parameters. They will provide very useful information about the type of drifts, and it will have applications to the novelty detection problem.

\section*{Acknowledgment}

%The preferred spelling of the word ``acknowledgment'' in America is without 
%an ``e'' after the ``g''. Avoid the stilted expression ``one of us (R. B. 
%G.) thanks $\ldots$''. Instead, try ``R. B. G. thanks$\ldots$''. Put sponsor 
%acknowledgments in the unnumbered footnote on the first page.

\vspace{12pt}
%\clearpage
\bibliographystyle{IEEEtran}
\bibliography{references}
\end{document}